\newtheorem{proposition}{Proposition}
\newtheorem{definition}{Definition}
\renewrobustcmd{\bfseries}{\fontseries{b}\selectfont}
\renewrobustcmd{\boldmath}{}
\title{Connection-Based Scheduling for Real-Time Intersection Control}  
\author {
    Hsu-Chieh Hu,\textsuperscript{\rm 1}
    Joseph Zhou, \textsuperscript{\rm 1}
    Gregory J. Barlow, \textsuperscript{\rm 1}
    Stephen F. Smith, \textsuperscript{\rm 1,2}\\
}
\begin{document}

\maketitle
\begin{abstract}  
We introduce a heuristic scheduling algorithm for real-time adaptive traffic signal control to reduce traffic congestion. This algorithm adopts a lane-based model that estimates the arrival time of all vehicles approaching an intersection through different lanes, and then computes a schedule (i.e., a signal timing plan) that minimizes the cumulative delay incurred by all approaching vehicles. State space, pruning checks and an admissible heuristic for A* search are described and shown to be capable of generating an intersection schedule in real-time (i.e., every second). Due to the effectiveness of the heuristics, the proposed approach outperforms a less expressive Dynamic Programming approach and previous A*-based approaches in run-time performance, both in simulated test environments and actual field tests. 
\end{abstract}

\section{Introduction}
As the number of vehicles around the world continues to increase along with the population, traffic congestion is already a serious problem for most urban areas. It is generally recognized that better optimization of traffic signals is crucial to future urban mobility. With the advancement of sensing and vehicular technologies, it is time to rethink current approaches to traffic control systems, that were designed decades ago, and by and large still assume analog hardware components. One promising approach that has emerged in recent years is {\em schedule-driven traffic control}. \cite{Xie2012,xie2012schedule,smith2013smart}. Under this approach, traffic signal control is formulated as a decentralized online planning process. Each intersection utilizes real-time sensing to predict the future arrival times of approaching vehicles at the intersection and then solves a special type of single machine scheduling problem,   
where the intersection is the machine, input jobs are sequences of spatially proximate vehicle clusters representing queues and approaching platoons, and the objective is to minimize cumulative delay. Once an intersection computes its schedule, it communicates what traffic it expects to be sending to its neighbors to allow intersections to generate longer horizon plans over time and achieve coordination at the network level. The original work already demonstrated substantial improvements over traditional signal approaches in over $20$ North American cities and is continuously expanding its deployments to other areas.




One key factor to the effectiveness of the schedule-driven traffic control algorithm is an ability to accurately predict when sensed vehicles will arrive at and pass through the intersection \cite{hu2021incorporating}. Therefore, the use of a more expressive traffic flow model becomes an inevitable direction for improving the algorithm's ability to determine whether to extend or end a green signal adaptively in real-time. However, there is a trade-off between use of a higher fidelity model and real-time solvability. The exponentially larger number of traffic states that results from shifting from a single approach model to a lane-based model, for example, makes computation of an optimal schedule in real-time more challenging, especially, for complex intersections that have to serve multiple movements in parallel including left turns. Heuristic scheduling based on A* search is a promising way to solve this problem. In \cite{goldstein2019expressive}, a lane-based model was adopted, and a heuristic function was defined that exploits the structure of this new parallel machine scheduling problem formulation to enable generation of optimal schedules in real-time.

In this paper, we push the boundary of applying heuristic scheduling to traffic control further by improving the search model and proposing a more efficient heuristic function to speed up optimization. First, a new search model based on a lane-based traffic intersection formulation is presented. The new model enables the traffic state to be updated by individual vehicles within a vehicle cluster containing multiple geometrically adjacent vehicles, which means that a given input job is allowed to change its duration depending on the previous state during the search \cite{hu2021incorporating}, and scheduling decisions thus reflect more  realistic traffic dynamics. Second, a new heuristic function is proposed to solve the scheduling problem more efficiently. Instead of approximating the scheduling problem by multiple preemptive subproblems, as described in \cite{goldstein2019expressive}, we compute an exact solution to a relaxed version of the original lane-based scheduling problem with unit preemption \cite{morton1993heuristic}. Finally, four new pruning checks leveraging various timing constraints in the problem are introduced to speed up the search. The first two utilize maximum and minimum green time constraints on the length of signal phases respectively, the third exploits the reformulated dominance criterion for the new model, and the fourth draws on the cycle constraint used to determine next movements.  We demonstrate the potential of these through software profiling, simulation evaluations and field tests, showing the proposed model and heuristics outperform prior work from both efficiency (i.e., run-time and number of state expansions) and effectiveness (i.e., delay and number of stops) performance perspectives.

\section{Scheduling Problem Formulation}
We consider an intersection that has multiple lanes from different directions. A pair of inbound and outbound lanes that are connected by the intersection is defined as a \textit{connection}. This is the minimum unit for describing a movement (e.g., left turn of south bound) at the intersection. A subset of connections that allow vehicles to pass through the intersection concurrently is defined as a \textit{stage}. A dual ring barrier controller is an instance of the above model that groups two compatible movements with one stage, described in Figure \ref{stage}. A sequence of stages then constitutes a cycle, which indicates when the stage and the corresponding connections should turn to green. Each stage, denoted as $s$, has maximum and minimum green time limit constraints to ensure fairness, represented by $max_s$ and $min_s$. Similarly, fixed yellow and red clearances are inserted at each stage transition for maintaining safety. In the following discussion, we assume each movement (e.g., North-to-East) has only one connection to simplify the presentation. It is straightforward to extend the model to the case that one movement may contain multiple lanes (i.e., connections).

In the model, the incoming traffic flows are treated as sequences of clusters $c$ approaching the intersection over the planning horizon $H$. Vehicles, represented by $v$, that are taking the same connection to traverse the intersection are clustered to form $c$ if the separation time between them is less than a pre-specified time interval (e.g., $1$ second). To avoid oversized clusters, a cluster limit is also imposed (e.g., $10$ seconds). Each cluster $c$ is thus composed of multiple vehicles with related information, namely $v = (m, arr, dep, q) \in c$ of each vehicle, where $m$, $arr$, $dep$ and $q$ are the connection index, arrival time, departure time and a flag to determine if $v$ is in the queue respectively. The clusters then become the input jobs that must be sequenced through the intersection. Note that the cluster is the basic element for manageable sequencing, while the timing is still updated through each vehicle within the cluster $c$ in order to increase the granularity. The details will be explored in later sections. Once a vehicle departs the intersection, it is sensed and grouped into a new cluster by the downstream neighbor intersection. The sequences of clusters provide short-term variability of traffic flows for optimization and preserve the non-uniform nature of real-time flows.

More specifically, the input to the online planning process at the beginning of each planning cycle is a set of \textit{connection cluster sequences}, which is denoted as $C_m$ at the connection $m \leq M$, where $M$ represents the number of all connections at the intersection. During scheduling, each cluster is viewed as a non-divisible job that incorporates multiple vehicles and an A* search is executed in a rolling horizon fashion to continually generate a stage schedule that minimizes the cumulative delay of all clusters. In practice, the planning cycle is repeated every second, to reduce the uncertainty associated with clusters and queues. The process constructs an optimal sequence of clusters that maintains the ordering of clusters along each input lane $m$, and each time a stage change is implied by the sequence. It is worthwhile to note that the subset $m \in \mathcal{M}_s$ corresponding to the same stage $s$ can be served simultaneously, so the problem we are going to solve is a parallel machine scheduling problem. 

\begin{figure}[!htbp]
\centering
\includegraphics[scale = 0.18]{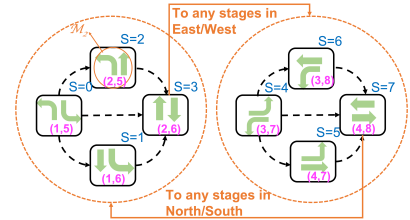}
\caption{Each stage has a subset of connections, and all stages constitutes a cycle through stage transitions. }
\label{stage}
\end{figure}

\section{Optimization Algorithm}
The scheduling module adopts A* search as the core optimization algorithm, given the sensed cluster sequences $C_m$ and current controller state. First, we define the search state and describe how to transition between states. Then, we introduce the heuristic to speed up search and reduce the size of the search space and explain how to calculate the heuristic function. Lastly, four pruning checks based on the timing constraints of traffic signals are described.     

\subsection{Search Overview}
The aim of the search is to find the optimal schedule (sequence) in terms of \textit{minimum cumulative delay} and compute each vehicle's departure time with respect to their arrival time without the interference of traffic signals and with sufficient time for crossing the intersection. To achieve this goal, the search algorithm requires the arrival time, the duration, and the queueing status of each vehicle. As mentioned above, the input ordered cluster sequences on each lane (connection) are represented as $C_m$. In addition, the controller state, including the current stage, the elapsed time, and the end time of each connection, is imported to create a root state. In the following sections, the search states and their corresponding transitions are defined, followed by a description of the root state and goal state.

\subsubsection{Search States}
During search, a schedule is generated by successively appending new clusters to the current partial schedule in time-forward order. Appending a new cluster $c$ will thus represents a  transition to a new search state. A search state is defined as tuple: $(s, m, sd, start_s, \mathbf{t}, \mathbf{q}, d, h)$, where $s$ is stage index, $m$ is connection index, $sd$ is duration of the most recent stage, $start_s$ is the start time of the stage $s$, $\mathbf{t} = (t_1,\cdots,t_{M})$ is the finish time on all connections. To estimate the queueing delay and increase the accuracy of the cumulative delay, tracking of queue count through online planning is applied \cite{hu2021incorporating}. To minimize the objective and support heuristic search, $d$ is the cumulative delay incurred so far, and $h$ is an underestimate of the remaining delay (i.e., heuristic). Other than the tuple, the number of served clusters on lane $m$, represented by $N_m$, are tracked and used to determine if the goal state in which all clusters are served is reached. The vehicle and cluster order is maintained for each $C_m$ since the vehicles cannot pass vehicles on the same lane.

As mentioned previously, each stage is composed of multiple connections (i.e., multiple input-output lane pairs). Since these connections are compatible traffic movements, they can be served simultaneously in parallel. We view $start_s$ as the earliest time for the vehicles on these connections to cross the intersection and apply it to update the end times $\mathbf{t}$. With $\mathbf{t}$, the actual start time of each vehicle within the cluster can be determined. If $t$ is greater than the $arr$ of the vehicle, the vehicle is queued, and the queue count is increased. After updating end times, stage duration $sd$ is the maximum of the end times on the lanes corresponding to the current stage.  

Each state also tracks a cumulative delay value $d$, which is the total delay incurred thus far. For A* search, $d$ is actually the $g$ value of the heuristic search. Also, a $h$ value which is lower bound of the remaining delay is tracked, and $d+h$ is equal to the $f$ value. The search state also maintains the connection index, which is used to determine the optimal schedule by tracing back the states. 

 \subsubsection{Root State}
 We start the search with a root state that is $(s_{curr}, m_{curr}, start_s, \mathbf{0}, \mathbf{q}_{curr}, 0, 0)$. The $s_{curr}$ and $m_{curr}$ are the current stage and connection, $\mathbf{q}_{curr}$ is the current queue counts of all lanes, and $\mathbf{t}$ and $h$ are initialized to $0$ to minimize delay incurred from now to the future.  

\subsubsection{Goal State}  
A goal state is one where all clusters (i.e., all vehicles within the $H$) have been serviced to pass through the intersection. More specifically, the number of serviced clusters on each lane is $N_m = |C_m|$, where $|C_m|$ is the number vehicles on the lane $m$.

\subsubsection{State Transitions}
When a new cluster $c$ is scheduled, a state transition will be incurred. The search state will be updated in Algorithm \ref{transition}. First, given a parent state $S_{p}$, a child state of serving $c$ is expanded. The vehicles $v$ within $c$ are then used to update the new state $S$ based upon the $S_{p}$.

 The algorithm first searches for the next cluster on each connection and uses them for expansions. To compute the child state given the parent state $S_p$ and a cluster $c$ on the connection $m$, Algorithm \ref{transition} determines if the stage is changed after serving $c$ by $nearestStage(m, s_p)$ (e.g.,  follow the transitions shown in Figure~\ref{stage}). If the stage change happens, we reset the stage duration $sd$ and stage start time $start_s$ for the child state. In addition, the end times will also have to be shifted by $minSwitch(s_p, s)$ which returns the minimum time required for switching from phase $s_p$ to $s$. 
 
Each cluster may contain multiple vehicles that are clustered based on the pre-defined time interval. In order to update the state in a detailed way, the state is updated by each vehicle within the cluster iteratively according to their $arr$. For instance, we estimate queueing delay and count accurately vehicle by vehicle to more accurately reflect a cluster's delay contribution. The cluster can also change its shape depending on the previous state during search. Queue count is similarly improved by serving individual vehicles within the cluster. On the other hand, we check if the cluster is over $max_s$ by comparing each vehicle's end time $t$ with $max_s$ and thus will not overestimate the end time. For each vehicle, their delay and end time are updated individually, while the computation is only multiplied by the max size of the clusters.
 
 After updating the child state, we calculate a lower bound of future delay as a heuristic estimate for A*'s $h$ value given the child state's end time $\mathbf{t}$, the remaining clusters $\mathbf{C}$ and the new count $\mathbf{N}$, where $\mathbf{C}$ and $\mathbf{N}$ are two arrays $(C_1, \cdots, C_M)$ and  $(N_1, \cdots, N_M)$. We then add the cumulative delay $d$, which is actually the $g$ value of A*, to $h$ to obtain the child state's $f$ value and add the child state to the sets of the explored states described in Algorithm \ref{expand}. Given the parent state, this update process repeats for each possible cluster on all possible connections. 
 
We present an example of a search state in Figure \ref{state} and its clusters on the connections in Figure \ref{timeline}.  For a partial schedule to the search state, the number of clusters being scheduled previously on connection $m$ is $N_m$, which is the $m_{th}$ element of $\mathbf{N} = [0, 2,1,1,1,2,2,1]$. We can only transition to connections that have remaining clusters (i.e., $N_m < |C_m|$), so only connection $1$, $2$, $6$, $3$ and $7$ are valid to be expanded from the state. If the stage sequence should obey the cycle according to Figure \ref{stage}, connection $1$ has to be served after stage $4$ that contains connection $3$ and $7$ and should be excluded for the transitions.

\begin{figure}
\begin{minipage}{.23\textwidth}
  \centering
  \includegraphics[scale=0.16]{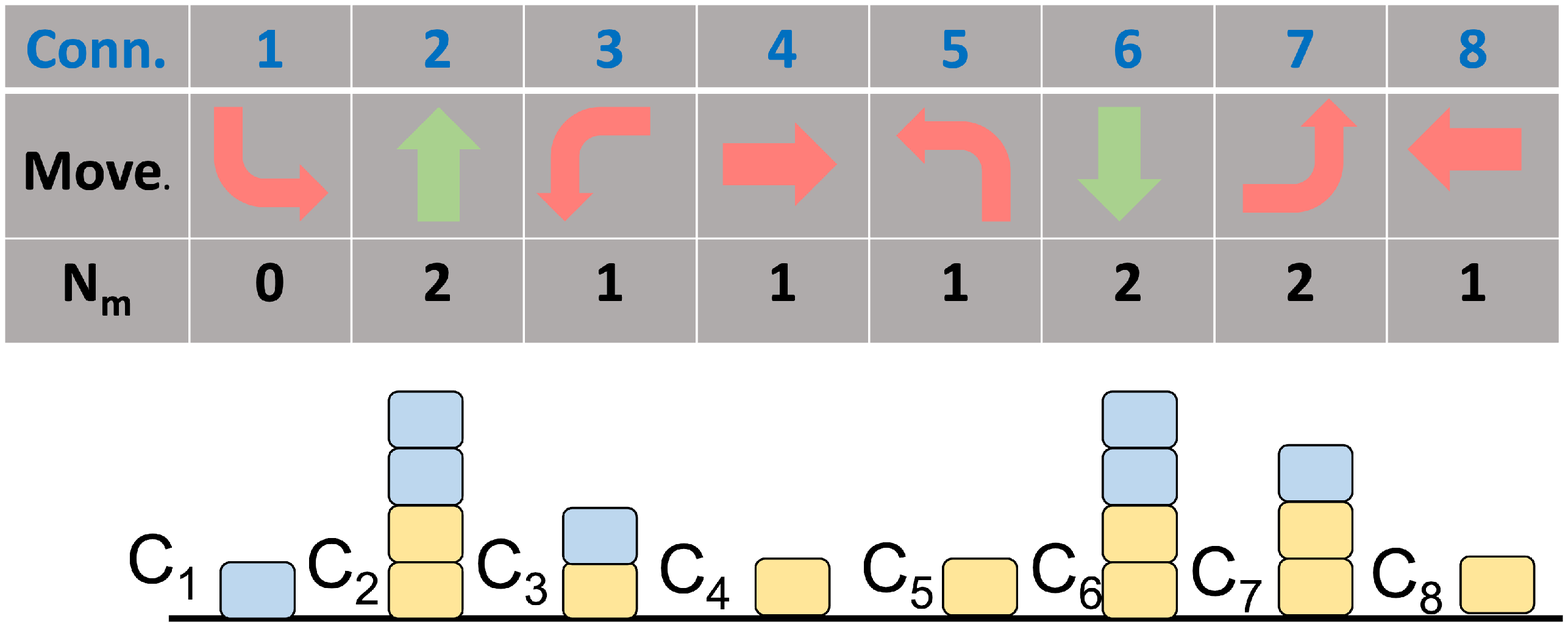}
  \captionof{figure}{A search state with $8$ connections and $8$ stages}
  \label{state}
\end{minipage}
\begin{minipage}{.23\textwidth}
  \centering
  \includegraphics[scale=0.18]{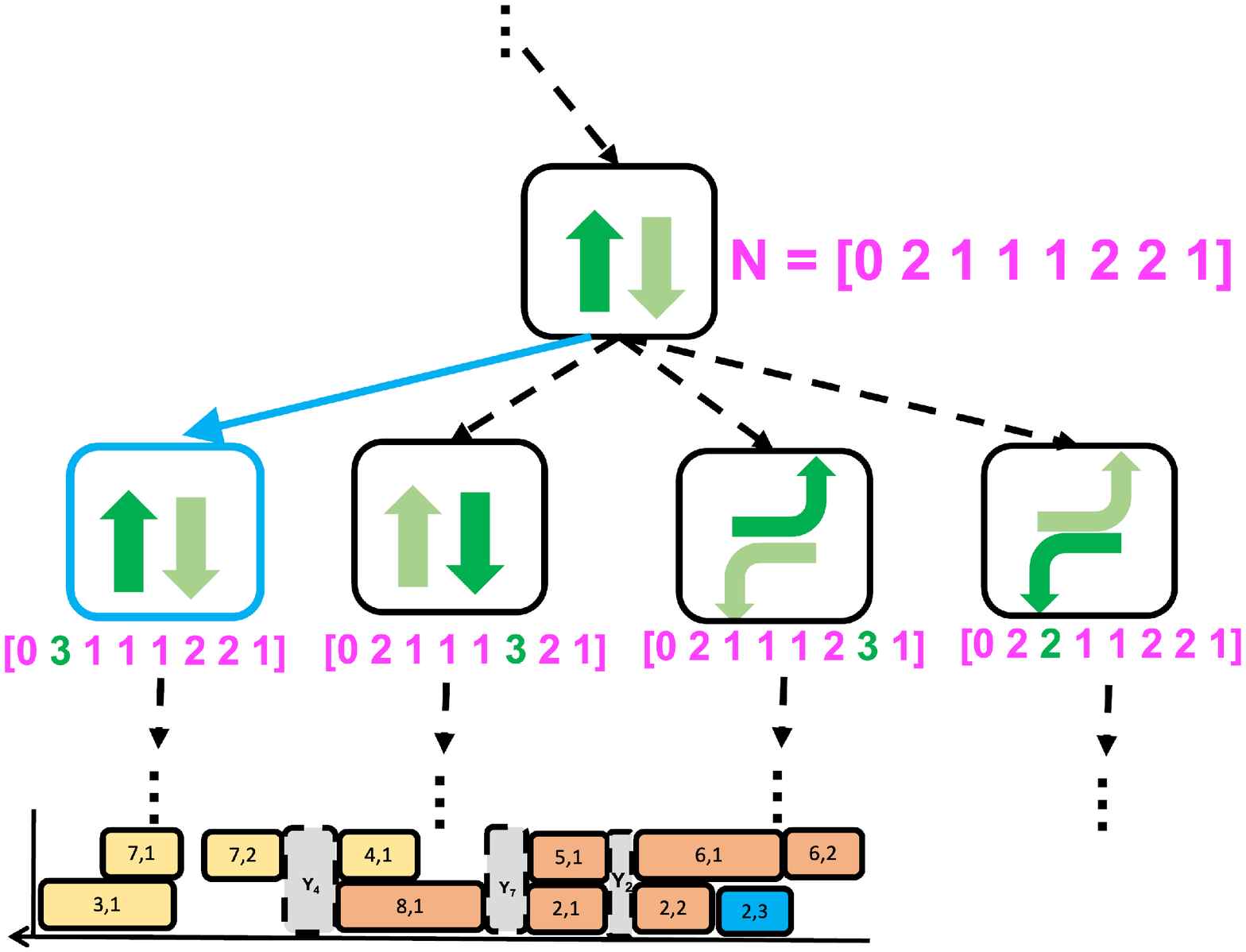}
  \captionof{figure}{State Transition Example}
  \label{tr}
\end{minipage}
\end{figure}


\begin{figure}[!htbp]
\centering
\includegraphics[scale = 0.22]{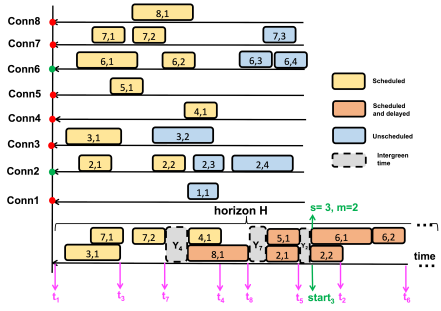}
\caption{Clusters on each connection and a partial schedule. The label on each cluster is $(connection, index)$. }
\label{timeline}
\end{figure}
\begin{algorithm}[tb]
\begin{algorithmic}[1]
\Require 1)$S_p = (s_p, m_p, sd_p, start_{s_p}, \mathbf{t}, \mathbf{q}, d, h)$ ; 2) $c\in C_m$ on connection $m$
\State $s = nearestStage(m, s_p)$, $sd = sd_p$
\State $t_m = t_m + minSwitch(s_p, s)$
\If{$s \neq s_{p}$}\quad$start_s = t_m, sd=0$\EndIf
\For{$v \in c$}
\If{$sd > max_s$}\quad \textbf{Break} \EndIf
\State$(dep(v), q_m)=queueingDelay(t_m, q_m)$
\State$d = d+ \max(t_m - arr(v), 0),N_m += 1$
\State $t_m = \max(arr(v), t_m) + dep(v) - arr(v)$
\State$sd = sd + t_m - start_s$
\EndFor
\State$h = calcHeuristic(s, \mathbf{t}, \mathbf{C}, \mathbf{N})$
\State\Return $S = (s, m, sd, start_{s}, \mathbf{t}, \mathbf{q}, d, h)$
\end{algorithmic}
\caption{$updateState(S_p, c)$}
\label{transition}
\end{algorithm}


The example in Figure \ref{timeline} represents the partial schedule (i.e. the search path) specifying clusters' arrival time and size on each connection. In this example, clusters contain either $1$ or $2$ vehicles. When we update the state with the cluster of size greater than $1$, each vehicle within the cluster will be used to update the state according to line $5$ of Algorithm \ref{transition}. From the partial schedule, we can know the corresponding end time on each connection and the future start time of each stage by delaying the clusters and adding the fixed duration of intergreen time (i.e., yellow and red clearance). For instance, stage $4$ is initially brought up and ending at $t_7$. Since there are no clusters on connection $1$, $t_1$ remains at $t_1 = 0$. Then, we switch to stage $7$ and know its start time and end time after adding the intergreen time $Y_4$ of stage $4$ and serving two clusters in which one is delayed (orange) and the other remains the free flow speed (yellow). It is the same to calculate the start time and end time of the following stage $2$ except both clusters are delayed. When the partial schedule ends with connection $2$ at stage $3$ (as shown in Figure \ref{timeline}), we can transition to stay at the same stage by expanding connection $2$,$6$ or immediately end the stage $3$ and switch to stage $4$ by serving connection $3$, $7$. As shown in Figure \ref{tr}, the new partial schedule is generated by expanding connection $2$ and staying in stage $3$. 
 
 \subsection{A Heuristic for Traffic Control}
 In this section, we introduce an algorithm to compute the $h$ value that is crucial to any A* search. The lower bound of future delay is obtained through solving a relaxation of the above scheduling problem. On the other hand, four pruning checks relevant to the constraints of traffic signal control are introduced to speed up the entire search process.
 \subsubsection{Heuristic Function}
To compute a lower bound on the future delay to be incurred, we define a heuristic function that solves a relaxation of the scheduling problem for the remaining decisions to be made (which corresponds to the set of remaining unscheduled clusters). The first step is to calculate the earliest start time of the first available unserved clusters on each available connection. Then, we convert this original problem into a corresponding dynamic scheduling problem with preemption, which historically has been used to achieve a good lower bound in branch-and-bound solutions to the nonpreemptive variant. Most importantly, the new relaxed problem has an exact solution \cite{morton1993heuristic}. In this section, a pipeline for computing the heuristic function is described.
 
First, we formulate the original dynamic scheduling problem with $C_m$ on each connection. We aim to search different combinations of clusters on the connections to minimize cumulative delay, which can be expressed as
\begin{equation}
\sum_{m = 1}^{M} \sum_{c\in C_m} \sum_{v\in c} d(v),
\end{equation}
where the delay $d(v)$ of each vehicle that contributes to the cumulative delay is $d(v) = \max(t_m - arr(v), 0)$.
This objective can be interpreted as weighted flow time in the formal dynamic single-machine model that minimizes the summation of the difference between completion time and ready time. This set of problems has exact results if preemption is allowed. To translate our problem to a dynamic scheduling problem, we take $arr(v)$ as the ready time and $t_m$ after serving $v$ as the completion time, and we aim to minimize the flow time, which is the amount of time vehicle $v$ spends in the system. The dynamic setting also reflects the fact that the vehicles arrive at different times to the local intersections. \cite{morton1993heuristic} described a priority rule to solve this problem with preemption, which is called preemptive dynamic weighted shortest processing time (PDWSPT), and we translate it to fit our problem: 
\begin{definition}
The PDWSPT priority rule for the traffic control is given as follows
\begin{enumerate}
\item When a vehicle is crossing the intersection, start the currently available vehicle with the highest $1/dur$, where $dur = dep - arr$. If we take the weights of the previous formed clusters \cite{hu2017icaps,hu2017softpressure} into account, the priority should be modified to $w/dur$ where $w$ is the pressure weights or the number of vehicles within the cluster.
\item When a vehicle $v_i$ arrives with another vehicle $v_j$ currently crossing the intersection, start $v_i$ provided $w_i/dur_i > w_j/dur_j$; otherwise continue serving $v_j$.
\label{pdwspt}
\end{enumerate}
\end{definition}
We can prove that the PDWSPT rule provides an exact optimal solution to the dynamic weighted flow problem.  
\begin{proposition}
The PDWSPT rule is optimal for the dynamic weighted flow problem.
\label{optimality}
\end{proposition}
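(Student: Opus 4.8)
The first step is to recognize the dynamic weighted flow problem as a single-machine preemptive scheduling problem. Each vehicle $v$ becomes a job with release date $r(v)=arr(v)$, processing requirement $p(v)=dur(v)=dep(v)-arr(v)$, and weight $w(v)$ (the cluster weight, or the number of vehicles in the cluster); a feasible schedule assigns to each job $p(v)$ units of machine time, all of it at or after $r(v)$, with at most one job running at any instant and preemption allowed at the quantum boundaries. The flow of $v$ is $F(v)=C(v)-r(v)$ with $C(v)$ its completion time, so the objective is $\sum_v w(v)F(v) = \sum_v w(v)C(v) - \sum_v w(v)r(v)$; the second sum is a constant of the instance, hence minimizing total weighted flow is equivalent to minimizing $\sum_v w(v)C(v)$. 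I would then prove that the PDWSPT schedule $\sigma^{\mathrm{PD}}$ --- at every time $t$, among the jobs with $r(v)\le t$ that are not yet finished, run one of maximum priority $w(v)/p(v)$ --- attains this minimum, by an interchange argument that converts an arbitrary optimal schedule into $\sigma^{\mathrm{PD}}$ without ever increasing the cost.

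So take an optimal preemptive schedule $\sigma$. First normalize it: if the machine is ever idle while some released job is unfinished, pull later work into that idle slot; this never increases any $C(v)$, so we may assume $\sigma$ is non-idling in this sense. Now suppose $\sigma \neq \sigma^{\mathrm{PD}}$, i.e. there is an instant at which $\sigma$ runs a job $j$ while a released, unfinished job of strictly higher priority is waiting; let $t_0$ be the earliest such instant, let $i$ be the highest-priority job released and unfinished at $t_0$ (so $w(i)/p(i) > w(j)/p(j)$), and set $\Theta=\max(C(i),C(j))$ under $\sigma$. Inside the window $W=[t_0,\Theta]$ both $i$ and $j$ are released throughout, so I reshuffle only the time $\sigma$ devotes to $i$ and to $j$ inside $W$: schedule the remaining work of $i$ as early as possible within $W$, then the remaining work of $j$ in the leftover slots, leaving every other job's intervals --- and therefore every other completion time --- exactly as before. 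This keeps the schedule feasible (the total machine load on every sub-interval of $W$ is unchanged, and neither $i$ nor $j$ is moved before its release), and it changes the cost only through $C(i)$ and $C(j)$: $C(i)$ does not increase, while $C(j)$ increases to at most $\Theta$. A short computation --- the two-job exchange (Smith-rule) inequality --- using $w(i)/p(i) > w(j)/p(j)$ together with the fact that the combined $i$-plus-$j$ workload inside $W$ is fixed, shows $w(i)\,\Delta C(i) + w(j)\,\Delta C(j) \le 0$. Hence the reshuffled schedule is again optimal.

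The last step is termination. Each reshuffle removes every PDWSPT violation at time $\le t_0$ (nothing before $t_0$ is touched, and at $t_0$ the machine now runs the top-priority available job), so the earliest-violation time strictly advances; since the makespan equals the fixed total workload and the candidate violation times are integer multiples of the quantum below it, there are only finitely many of them, and after finitely many reshuffles we reach a non-idling schedule with no PDWSPT violation --- which, once a deterministic tie-break among equal-priority jobs is fixed, is $\sigma^{\mathrm{PD}}$ itself. Cost never increased along the way, so $\sigma^{\mathrm{PD}}$ is optimal, which is the claim; reading the same argument at the level of clusters, with $w$ the cluster weight, yields the weighted form of the PDWSPT rule.

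The part I expect to be the real work is the interchange step under arbitrarily interleaved preemptions and release dates: one must check carefully that moving only $i$'s and $j$'s own time inside $W$ keeps the schedule feasible for all jobs (this is exactly why the swap is confined to time the two jobs already occupied, so machine-load and release constraints are inherited), that the intermediate jobs' completion times are genuinely unchanged, and that the Smith-type inequality survives when $i$ and/or $j$ were already partially processed before $t_0$ --- i.e. that what enters the bound is the remaining processing inside $W$, while the priority key stays the original $w/p$. A second, more cosmetic issue is the preemption model itself: I would state the result for instances in which all $arr$ and $dep$ values, hence all release dates and processing times, are integer multiples of the scheduling quantum --- the regime in which the heuristic is actually invoked --- so that the ``unit preemption'' relaxation loses nothing.
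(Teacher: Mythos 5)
There is a genuine gap, and it sits exactly where you flagged it: the Smith-type exchange inequality does not survive partial processing, and in fact the statement you are trying to prove --- that the PDWSPT priority rule (key $w/dur$ on the \emph{original} durations, preempting whenever a higher-ratio job arrives) is optimal for the true preemptive weighted flow problem $1\,|\,r_j,pmtn\,|\,\sum w_jF_j$, where a job's flow is measured at the completion of its last quantum --- is false in general; that problem is strongly NP-hard, so no such priority rule can be exact. Concretely, take job $j$ with $r_j=0$, $p_j=10$, $w_j=10$ and job $i$ with $r_i=1$, $p_i=1$, $w_i=1.01$. PDWSPT preempts $j$ at time $1$ (since $1.01>1$), giving weighted flow $10\cdot 11+1.01\cdot 1=111.01$, whereas running $j$ to completion gives $10\cdot 10+1.01\cdot 10=110.1$. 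Running your own reshuffle on this instance at $t_0=1$ (window $[1,11]$, move $i$'s one unit first) yields $w(i)\,\Delta C(i)+w(j)\,\Delta C(j)=1.01\cdot(-9)+10\cdot(+1)=+0.91>0$, so the claimed inequality fails precisely because $j$ was already partially processed and the priority key uses total rather than remaining work (and even a remaining-work key would not rescue exactness, by NP-hardness).

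The paper's proof avoids this by proving optimality for a different, weaker object: the \emph{unit-preemptive relaxation} in which each vehicle is chopped into unit slots, each slot carries weight $w/dur$ and contributes its \emph{own} delay to the objective (a fractional/mean-busy-time style objective), rather than the whole vehicle being charged at its final completion. For that relaxed problem, ``highest weight first among released units'' is optimal by a one-line adjacent interchange, and since each unit's completion is no later than the vehicle's true completion, the relaxed optimum underestimates the true remaining delay --- which is all that is needed for $h$ to be admissible. So the fix is not to strengthen your interchange argument but to restate what is being proved: either prove optimality for the unit-slot relaxation (where your interchange becomes the paper's trivial swap of adjacent equal-length units), or keep your schedule-level argument but change the objective to the per-quantum weighted delay, in which case $\Delta$-costs decompose over quanta and the exchange inequality does hold.
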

\begin{proof}
We form an unit-preemptive setting in which each vehicle has weight $w/dur$ with a fixed $dur$ at an unit slot. Thus, $w/dur$ can be simplified to $w$ without loss of generality. According to the rule, the optimal sequence $v_1, \cdots, v_n$ must satisfy $w_i > w_{i+1}$. Suppose there exists an optimal solution that $w_i \leq w_{i+1}$. Interchange $v_i$ and $v_{i+1}$ produces a schedule with a delay decreased by $w_{i+1} - w_i$. Thus, the original schedule is not optimal. 
\end{proof}

As mentioned in the previous sections, we allow multiple connections to be served concurrently if the movements have no conflicts. A new rule is proposed to accommodate this assumption and described in Figure \ref{hfunction}. 1) We obtain the permitted start time for each connection similar to Algorithm \ref{transition} (line $1$ and $2$). 2) The vehicles of all unserved vehicles are shifted to the permitted start time, and the unavoidable delay is computed for all vehicles and added on the $h$ function. The complexity of PDWSPT implemented with priority queues is polynomial, so we choose to calculate the $h$ function by the individual vehicles as in Algorithm \ref{transition}. 3) Sort all vehicles on all connections according to their arrival times. 4) We propose a \textit{connection-based PDWSPT} priority rule for connection-based traffic control to compute a lower bound for the remaining problem: if two overlapped vehicle $v_m$ and $v_n$ on the connection $m$ and $n$ can be served without any conflicts (i.e., $n$ and $m$ are at the same stage), these two vehicles are then combined to form a cluster with a summed weight. The end time of the new formed cluster is updated by $\max(dep(v_m), dep(v_n))$. Otherwise, PDWSPT is applied. 
\begin{proposition}
The connection-based PDWSPT rule is optimal for the dynamic weighted flow problem.
\label{cpdwspt}
\end{proposition}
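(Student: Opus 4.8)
The plan is to reduce the connection-based setting to the single-machine PDWSPT analysis of Proposition~\ref{optimality} by collapsing each maximal group of pairwise-compatible, simultaneously-running vehicles into one aggregated job. First I would reuse the unit-preemptive discretization from the proof of Proposition~\ref{optimality}: time is cut into unit slots, and a vehicle $v$ that is still unfinished in a slot contributes weight $w(v)/dur(v)$ to that slot's cost. Since connections sharing a stage can be served in parallel at no cost in the relaxation, running compatible vehicles $v_m, v_n, \dots$ in the same slot is, as far as the objective $\sum_v d(v)$ is concerned, identical to running a single \emph{super-vehicle} whose per-slot weight is $\sum_i w(v_i)/dur(v_i)$ and whose remaining work is the maximum of the individual remaining requirements --- exactly the ``combine into a cluster with summed weight and end time $\max(dep(v_m),dep(v_n))$'' operation used by the rule and in Figure~\ref{hfunction}. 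Thus the schedule the rule produces is, slot by slot, an ordinary single-machine preemptive schedule over the aggregated job set.

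Second, I would show that insisting on always aggregating every currently-available compatible pair costs nothing. If an optimal schedule has a slot in which vehicle $v_j$ runs while some already-released compatible vehicle $v_i$ idles, then shifting one unit of $v_i$'s work into that slot (alongside $v_j$, which is free) leaves every completion time unchanged except $v_i$'s, which can only drop; iterating yields an optimal schedule of the same value in which the machine is, at every slot, processing a fully-aggregated job. So there is an optimal solution of the form the connection-based rule generates, and it remains only to verify that the rule orders the aggregated jobs optimally.

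Third, I would rerun the interchange argument of Proposition~\ref{optimality} on the aggregated jobs. An aggregated job $A$ has effective per-slot weight $W(A) = \sum_{v\in A} w(v)/dur(v)$, and the rule serves available aggregated jobs in nonincreasing $W(A)$, preempting a running job only when a newly released job has strictly larger $W$ (mirroring parts~1 and~2 of Definition~\ref{pdwspt}). As in the earlier proof, if an optimal sequence ever runs a unit block of $A$ immediately before one of $A'$ with $W(A) \le W(A')$ while both are available, swapping the two blocks changes the objective by $W(A) - W(A') \le 0$; repeating the swaps drives the schedule to the rule's order without increasing cost. Together with the release-time handling --- a job is simply unavailable before its release, and a dominating late arrival triggers a preemption --- this establishes optimality.

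The step I expect to be the main obstacle is the second one: arguing rigorously that ``always aggregate'' is without loss of generality when aggregation reshapes the remaining work (the $\max$ of processing requirements means the shorter vehicle finishes ``inside'' the longer one for free), and then checking that the interchange in step three is still legitimate after merging, since which vehicles get merged is itself schedule-dependent. The clean way around this is probably to fix a canonical optimal schedule in which exactly the rule's merges occur --- obtained from an arbitrary optimum by the idle-slot-filling and block-swapping moves above --- and perform all exchanges there; one still has to handle the case where a compatible partner is released only partway through another job's block, which needs a short case split paralleling part~2 of Definition~\ref{pdwspt}.
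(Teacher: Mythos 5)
Your plan matches the paper's own argument: the paper likewise first shows by an exchange argument that any optimal schedule must combine compatible overlapped vehicles (it phrases this as "shifting the lower-weight vehicle to the end only increases delay," the mirror image of your idle-slot-filling step), and then simply invokes the adjacent-interchange argument of Proposition~\ref{optimality} on the merged jobs with summed weights. Your version is more carefully spelled out (explicit unit-slot aggregation, and the honest flag about the $\max$-of-durations and schedule-dependent merges, which the paper glosses over), but it is essentially the same proof.
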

\begin{proof}
Suppose there exists an optimal solution where the two overlapped vehicles are not combined. Shifting the vehicle with the lower weight to the end of the other one will always increase the delay by itself and all later vehicles. Thus, solutions that don't combine vehicles at the same stage and time slot are not optimal. The rest of proof is similar to Proposition \ref{optimality}.  
\end{proof}

In the above proof, we apply unit preemption in which each vehicle can only be preempted in unitary elements can be always replaced for ordinary preemption since the units can be made arbitrarily small as desired to approximate the normal preemption. 


To better explain the idea of how the heuristic is computed, we use the state presented in Figure \ref{state} as an example. The current state ends at stage $3$, and $7$ clusters are not scheduled yet. Checking the end times shown in Figure \ref{timeline}, all $7$ clusters will be shifted to the current end time of stage $3$ that is $t_6$ except the clusters on the connection $2$ and $6$. An unavoidable delay is incurred by all shifted vehicles ($\delta t_1$). For example, cluster $(2,3)$ is shifted to the end of cluster $(2,2)$, and cluster $(3,2)$ is shifted to $t_6$ with an added intergreen time. Based on the the proposed PDWSPT, we have to sort the clusters in terms of their shifted arrival time. The orderings will be thus $(2,3), (2,4), (6,3)$, and $(6,4)$. The remaining clusters, which are $(1,1)$, $(3,2)$ and $(7,3)$, will have the same $t_6$ plus the corresponding intergreen time as their current arrival time. The four clusters at stage $3$ are scheduled first without preemptions or any incurred delay. If we assume each vehicle's duration and weight are equivalent for simplicity, $(3,2)$ and $(7,3)$ cannot preempt the previous clusters. Their starting time will be postponed to the end of scheduled $(6,4)$, and an additional delay is incurred ($\delta t_2$). According to the stage transition of the signal cycle in Figure \ref{stage}, $(1,1)$ has to be scheduled after stage $4$ containing $(3,2)$ and $(7,3)$ and an additional delay is incurred ($\delta t_3$). The computed heuristic will be the summation $\delta t_1 + \delta t_2 + \delta t_3$.  

\begin{figure}[!htbp]
\centering
\includegraphics[scale = 0.2]{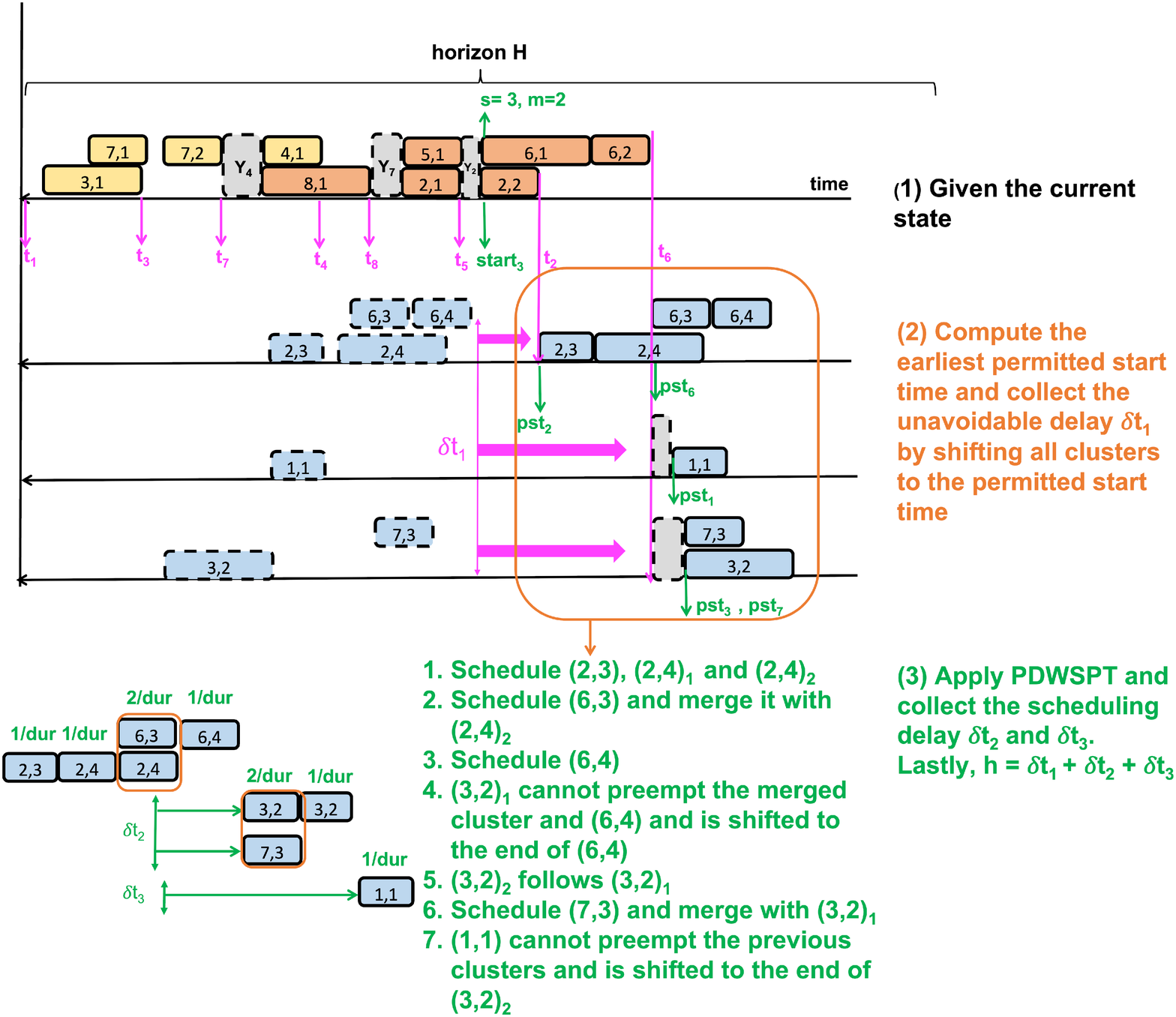}
\caption{Compute the heuristic given a partial schedule.}
\label{hfunction}
\end{figure}

\begin{algorithm}[tb]
\begin{algorithmic}[1]
\For{$c \in C_m(N_m), m \leq M$}
\State$S = updateState(S_p, c)$
\If{$sd$ of $S > max_s$}  insert $S$ to $maxSet$
\ElsIf{$sd$ of $S<min_s$} insert $S$ to $minSet$
\Else \quad insert $S$ to $candSet$
\EndIf
\EndFor
\If{$minSet$ is not empty} $candSet = minSet$
\ElsIf{$candSet$ is empty} $candSet = maxSet$
\EndIf
\For{$S \in candSet$}
\If{$domEqCheck(S)$}
\State insert $S$ to $openQueue$
\EndIf
\EndFor
\end{algorithmic}
\caption{$expandNeighbors(S_p)$}
\label{expand}
\end{algorithm}
\subsection{Pruning for Traffic Control}

In the following sections, four pruning checks based on the traffic control are performed before the expansions of the child state to reduce the number of search states are specified in Algorithm \ref{expand}. First, most traffic control systems implement maximum and minimum green time to ensure fairness. With these constraints, the candidate states that violate these constraints are not expanded. Second, a dominance check based on individual vehicle updates is performed to determine if the child states are dominated by the hashed states. Third, since we allow multiple non-conflicting connections to be served concurrently by the same stage, the states with different orderings of the connections on the same stage should be treated equivalently. 

\subsubsection{Maximum and Minimum Check}
As mentioned before, each stage has its own $max_s$ and $min_s$ to ensure fairness and allow drivers to react the start of the green interval. It is thus unnecessary to insert the search states, whose $\mathbf{t}$ is larger than $max_s$ or connection does not belong to current stage's connections if $sd$ < $min_s$, into the queue for expansions to reduce the search space. In Algorithm \ref{expand}, three sets are maintained for the expansion. First, we insert the child states that are over $max_s$ into $maxSet$. When we run into a child state whose end time is less than $min_s$, then it will be put into $minSet$. For those states falling between $max_s$ and $min_s$, they will be put in $candSet$. Now we determine which set to expand (line $8$ and $9$). If the $minSet$ is non-empty, it will be always expanded since any expansions not including them should not be optimal. Considering another case that the $candSet$ is empty, then the $maxSet$ is expanded without loss of optimality. otherwise, we expand the $candSet$.

\subsubsection{Dominance and Equivalence Check}
The search algorithm performs dominance checks on the states in the $candSet$ (line $12$). We hash the states based on $\mathbf{N}$ and $m$ that are converted to an unique identifier. The new child state $s$ is dominant over a hashed state $s'$ if its cumulative delay $d$ is less than the hashed state's $d$ and the $t_m(s) \leq  t_m(s')$ for all connections. In other word, $s$ is serving the same number of vehicles on each lane with lower cost and more timing flexibility to accommodate more vehicles. 

Other than dominance check, we also check if the current path is equivalent to the previous expanded paths. Since multiple connections are allowed to be served simultaneously at the same stage, it would be redundant to traverse those paths. Whenever a new stage $s$ is brought up and the set $\mathcal{M}_s$ does not overlap the previous one, the distribution of the number of vehicles on the connections is tracked and compared. If they are equal, then this new state is unnecessary to be expanded. For instance, serving connections $1,2,1$ is equivalent to serving $2,1,1$. This check can be ignored if we remove the cycle constraint that all stage should be following a certain ordering.

A* search usually maintains a closed list to ensure that we will not expand the same states more than once. As described before, the states are defined by both discrete and continuous features including the stage, connection and the end times so on. Thus, the search is a \textit{tree search}, and visiting the exact same state twice is less likely. For this reason, it is unnecessary to do a closed list check as is typical in A* search, as previously observed in \cite{goldstein2019expressive}.

\section{Experimental Evaluation}
In this section, we utilize three ways to verify the efficacy of the proposed heuristic scheduling system. In timing analysis, our goal is to verify that the proposed PDWSPT heuristic returns a schedule with less expansions and time (i.e., it is a tighter lower bound) than the heuristic previously proposed for lane-based intersection scheduling. In the simulation evaluation, we demonstrate that the new detailed model with the proposed heuristic outperforms the previous heuristic and a less expressive, approach-based intersection scheduling method implemented using Dynamic Programming (DP) \cite{xie2012schedule}. In the field tests, an on-off evaluation is conducted on a corridor network over two weeks to measure if the proposed approach outperforms the DP approach. 

\subsection{Timing Analysis}
For the timing analysis, we compare the proposed PDWSPT heuristic with the one proposed in ERIS \cite{goldstein2019expressive}. The main difference between these two heuristics is that the ERIS heuristic divides the problem into multiple preemptive subproblems, while PDWSPT adapts an exact solution to the same scheduling problem with preemption. To better understand the effect of these two heuristics, the cycle constraints are lifted to allow the algorithm to determine the next stage (i.e., the equivalence check is thus unnecessary), so that the heuristic will be a tighter lower bound to the problem. The search aims to determine the optimal stage sequences for minimizing the cumulative delay with a compute time limit of $5$ seconds. In addition, we also experiment with cases that include dominance checks only, dominance and maximum/minimum checks, no checks, and no heuristics. It is worthwhile to note that the A* search can be transformed to the Dijkstra algorithm by turning off the heuristics (i.e., $f = g$ instead of $f = g + h$). 

We prepare $1200$ test instances whose number of connections is from $3$ to $6$ and the number of vehicles ranges from $4$ to $46$ vehicles, and the data are generated continuously by a running simulation. For a tree search, the worst case of the state updates is $O(M^{\sum_{i=1}^MN_i})$. The problem instances are thus sufficient to provide various complexities and traffic patterns. Two metrics, the run-time performance and number of search state expansion percentiles, are collected and presented in Table \ref{profiling}.

Running the heuristic search with all pruning checks is seen to achieve the lowest number of state updates, comparing to other variations. Applying maximum and minimum checks only provides a small reduction in the state updates since we adopt a small minimum green time here. For the first $6$ rows of Table \ref{profiling}, the run-times are all below $500$ms for $75\%$ of time and can be reduced further if the cycle constraints are applied. If we look at both metrics, it implies that the run-time performance not only depends on the state expansions but also on the computational cost of running checks (iterate through all hashed states). The heuristics with all checks outperform the cases using only the heuristic $75\%$ of time in terms of both metrics, although the mean run-time performances are only comparable.

If we specifically compare PDWSPT with the ERIS heuristics, PDWSPT's run-time performance and the number of the state updates are always better than ERIS. For example, the number of PDWSPT's expansions with all checks is less than the same case of ERIS by $8\%$ at the $75\%$ percentile. PDWSPT provides more efficient and tighter lower bound as a heuristic of A*. In addition, the case without using heuristics (i.e., Dijkstra) shows that the heuristic function is quite essential for the scheduling algorithm. Without using the heuristic function, the search requires an order of magnitude increase in run-time to find the solution.

\begin{table}[!t]
\centering

		\centering
		 \scalebox{0.55}{
		 \begin{tabular}{*{11}{c}}
   \toprule
      \multirow{2}{*}{}& \multicolumn{5}{c}{Run-Time (ms)}  & \multicolumn{5}{c}{ Number of Expansions }  \\
       \cmidrule(l){2-6}\cmidrule(l){7-11}    
     \multirow{10}{*}{}& \multicolumn{1}{c}{ Mean}  &\multicolumn{1}{c}{$25\%$}  & \multicolumn{1}{c}{ $50\%$ }&\multicolumn{1}{c}{$75\%$} &\multicolumn{1}{c}{$95\%$}  & \multicolumn{1}{c}{ Mean}  &\multicolumn{1}{c}{$25\%$}  & \multicolumn{1}{c}{ $50\%$ }&\multicolumn{1}{c}{$75\%$} &\multicolumn{1}{c}{$95\%$} \\
  
     \midrule
   PDWSPT+D,M&603 & 14 & 59 & 306 & 5001 & 19288 & 1259 & 4681 & 18644 & 99722\\
  ERIS+D,M&617 & 15 & 62 & 319 & 5001 & 20781 & 1339 & 4978 & 20233 & 110738\\
  \midrule
  PDWSPT+D&609 & 15 & 60 & 322 & 5001 & 19976 & 1420 & 4905 & 19678 & 103095\\
  ERIS+D&645 & 16 & 67 & 352 & 5001 & 20770 & 1563 & 5361 & 21371 & 105184 \\
 \midrule
 PDWSPT&574 & 19 & 79 & 435 & 4230 & 53596 & 2102 & 8136 & 43133 & 294840\\
  ERIS&613 & 21 & 90 & 461 & 4536 & 56764 & 2371 & 9028 & 45795 & 330119\\
  \midrule
  Dijkstra+D,M& 2509 & 261 & 1734 & 5001 & 5001 & 91120 & 21258 & 85308 & 144850 & 220114\\
  Dijkstra&2969 & 600 & 3767 & 5001 & 5001 & 300546 & 77113 & 285931 & 484821 & 678883\\

      \bottomrule

  \end{tabular}
  }
  \caption{Run-time and number of state expansions comparisons for single intersection. (D: dominance check; M: maximum/minimum check)}
\label{profiling}
\end{table}

 \begin{table}[!t]
\centering		
\centering
\scalebox{0.6}{
\begin{tabular}{*{5}{c}}
   \toprule
      \multirow{2}{*}{}& \multicolumn{2}{c}{High demand}  & \multicolumn{2}{c}{Low demand}  \\
       \cmidrule(l){2-3}\cmidrule(l){4-5}    
     \multirow{4}{*}{}& \multicolumn{1}{c}{ Delay(s)}  &\multicolumn{1}{c}{No. of Stops}  & \multicolumn{1}{c}{ Delay(s)}&\multicolumn{1}{c}{No. of Stops} \\
     \midrule
 \textit{Marshall(6 ints.)}&&&&\\
     \midrule
      
 PDWSPT&$55.15\pm41.07$&$2.09\pm1.72$&$39.41\pm33.58$&$1.43\pm1.13$\\
ERIS&$57.08\pm42.92$&$2.21\pm1.90$&$40.19\pm34.25$&$1.43\pm1.14$\\
Dijkstra&$58.71\pm47.53$&$2.19\pm2.00$&$42.24\pm35.29$&$1.43\pm1.16$\\
DP&$65.22\pm56.65$&$2.44\pm2.60$&$40.89\pm34.29$&$1.42\pm1.11$\\
Actuation&$77.68\pm63.86$&$2.64\pm2.32$&$37.83\pm29.70$&$1.59\pm1.23$\\
    \midrule
     \textit{St. Albert(11 ints.)}&&&&\\
      \midrule
   PDWSPT&$50.82\pm47.20$ & $1.58\pm1.45$&$43.81\pm44.71$&$1.32\pm1.17$\\
  ERIS&$51.78\pm48.02$ & $1.58\pm1.43$&$46.12\pm46.49$&$1.35\pm1.19$\\
  Dijkstra&$52.37\pm48.86$ & $1.57\pm1.45$&$44.75\pm45.58$&$1.32\pm1.18$\\
  DP&$54.9\pm51.98$ & $1.58\pm1.49$&$45.43\pm45.68$&$1.32\pm1.15$\\
  Actuation&$52.37\pm51.69$ & $1.92\pm1.97$&$34.98\pm33.55$&$1.57\pm1.49$\\

      \bottomrule

  \end{tabular}
  }
  \caption{The delay and number of stops comparison with two networks}
\label{marshall}
\end{table}

\subsection{Simulation Results}
In this section, we compare the above PDWSPT heuristic to four other real-time traffic control methods. First, we compare the proposed heuristic with the ERIS heuristic and Dijkstra with the same lane-based model. Second, the less-expressive DP approach of \cite{xie2012schedule} is compared. \footnote{Note also that previous research with the baseline schedule-driven approach has shown its comparative advantage over prior real-time traffic signal control approaches \cite{xie2012schedule}.} Third, a fully-actuated control method wherein all stages are extended by actuation is implemented and compared. The schedule generation time limit in these experiments was set at $2$ seconds, with all methods returning the best suboptimal schedule within the time limit.

To evaluate the proposed heuristic, we simulate delay and the number of stops over two real world networks under two realistic traffic patterns of different demands. The first is a smaller highway interchange network that has $6$ intersections and complex traffic flows from all directions ($3710/$hour and $1855/$hour). The second is a lengthy corridor network that has $11$ intersections and higher demand over the main road ($5484/$hour and $2742/$hour). These two networks, as shown in Figure \ref{simnets}, are representative for understanding how the heuristics perform under different scenarios. The simulation model was in VISSIM, a commercial microscopic traffic simulation software package. Tested traffic volume is averaged over sources at network boundaries. To assess the performance boost provided by the proposed algorithm, we measure the average waiting time of all vehicles over five runs. All simulations run for $1$ hour of simulated time. 

For the first network, PDWSPT's delay and number of stops are always better than the other three online planning approaches. Our proposed heuristic is able to quickly find the optimal solution and achieve a better real-time responsiveness to the dynamic traffic flow. When the demand is higher, the advantage becomes more obvious. The results of the corridor model shows the same trend. For the actuation approach, although it can achieve a lower delay under the low demand, it increases the number of stops, another important metric in evaluating traffic control.

\subsection{Field Test Results}
To demonstrate the potential of heuristic scheduling methods in the field, a heuristic scheduling real-time system with PDWSPT was deployed on a corridor at Chicopee, Massachusetts. The corridor consists of $13$ intersections shown in Figure \ref{fieldmap}. To evaluate the delay improvements, we used a third-party detector system to evaluate the delay of south bound (main flow) during PM rush hour (i.e. 3PM -6PM). The delay was obtained by averaging the time between stop-bar detector actuation and when the light turns green.We collected traffic data for two weeks, the first week with the DP algorithm running, and the second week with A* with the PDWSPT heuristic running. Table \ref{chicopee} 
summarizes the delay improvement in seconds. 
The results show that $11$ out of $13$ intersections achieved lower average delay with PDWSPT heuristic than the DP method, and thus the average travel time of PM rush hour was also reduced by $3\%$. 
\begin{table}[!htbp]
\centering
\scalebox{0.6}{
\begin{tabular}{  l|lcc|l|ccc } 
 Intersection &DP(s)&A*(s)& Diff($\%$) & Intersection &DP(s)&A*(s)& Diff($\%$)    \\  \hline
Aldi &49.0&38.1&28$\%$&Abbey&16.8&10.2&64$\%$\\ 
Mass Pike. &79.9&82.9&-3$\%$&Granby&12.3&9.1&35$\%$\\ 
Pendleton &55.9&50.3&11$\%$&Irene &25.2&25.9&-2$\%$\\  
Britton&27&21&27$\%$&New Ludlow&50.6&40.4&25$\%$\\
James W.&37.8&38.6&-2$$&James E.&4.2&4.7&-10$\%$\\ 
Fairview&25.5&19.2&$32\%$&Streiber&25.1&18.2&37$\%$\\ 
Marketplace&48.6&46.8&3$\%$&Fuller&16.6&15.1&9$\%$\\\hline
\textbf{Travel Time}&$\mathbf{8m24}$&$\mathbf{8m7}$&$\mathbf{3\%}$&&&&
\end{tabular}
}
\caption{$11$ out of $13$ intersections achieve smaller delay with PDWSPT at Chicopee, MA during PM rush hours (3pm-6pm) over 5 days (4/21-4/25 v.s. 7/11-7/15, 2022).}
\label{chicopee}
\end{table}

\begin{figure}
\begin{minipage}{.2\textwidth}
  \centering
  \includegraphics[scale=0.22]{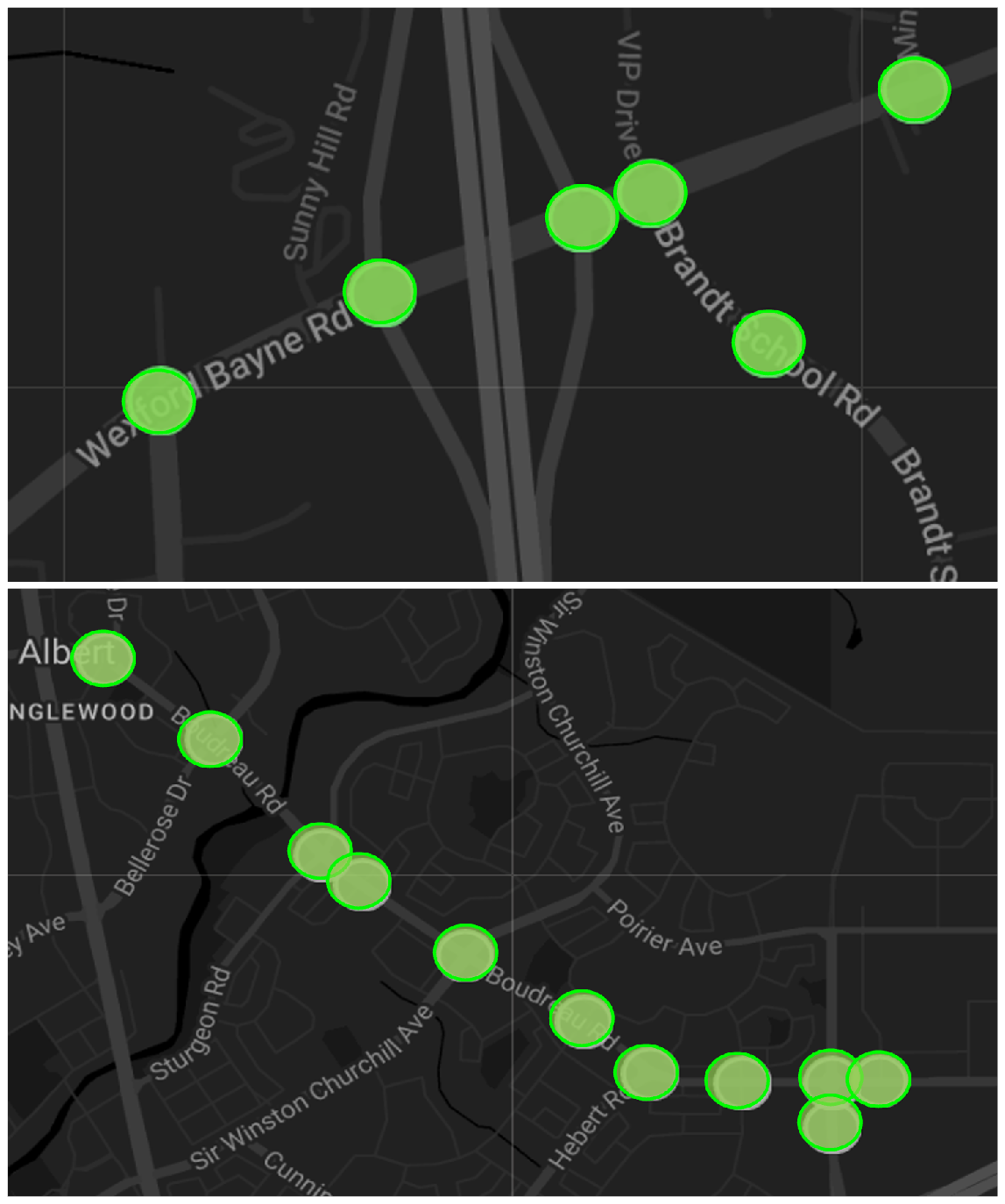}
  \captionof{figure}{Two simulation networks}
  \label{simnets}
\end{minipage}
\begin{minipage}{.25\textwidth}
  \centering
  \includegraphics[scale=0.35]{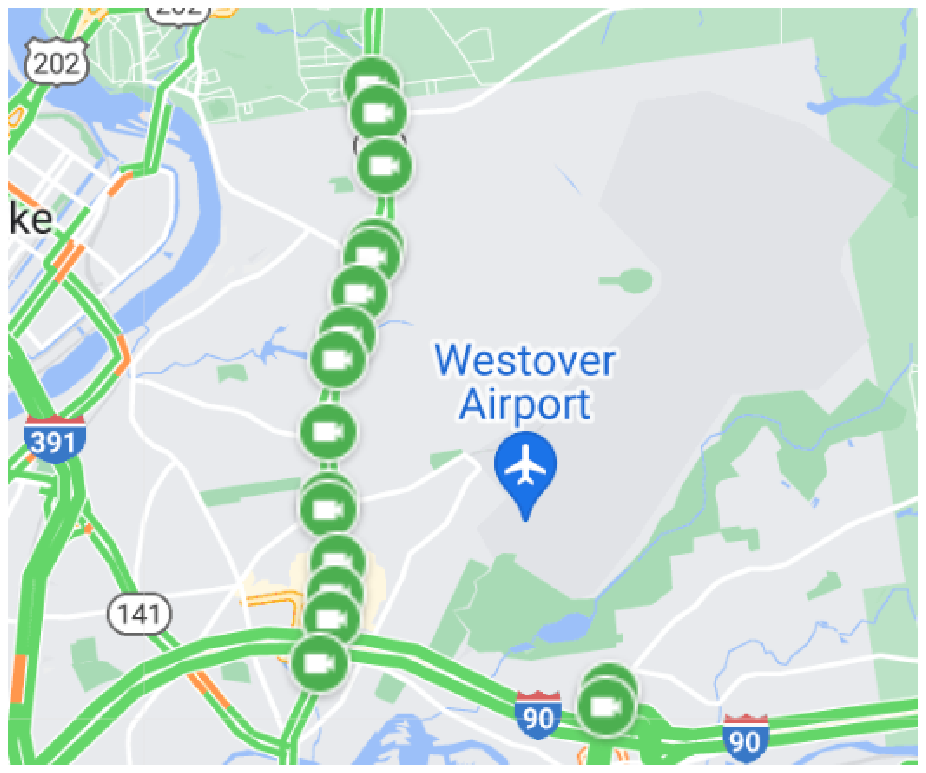}
  \captionof{figure}{13 intersections on Memorial Dr. of Chicopee, MA.}
  \label{fieldmap}
\end{minipage}
\end{figure}

\section{Conclusions}
In this paper, we have described a new efficient heuristic for lane-based optimization of traffic flows through signalized intersections. The lane-based formulation that is considered is more expressive than those considered previously by schedule-driven traffic control approaches, resulting in more accurate prediction of when sensed vehicles will arrive at the intersection and a stronger basis for optimization.
The proposed heuristic function is key to managing the combinatorics of the resulting state space,
 optimally solving a unit-preemptive relaxation of the original lane-based scheduling problem to provide a tight lower bound for A* search. Further efficiency is gained by incorporating four pruning checks to limit state expansion.
Experimental results and field evaluations showed that our heuristic scheduling approach outperforms previous schedule-driven approaches along both efficiency and effectiveness performance dimensions. 



\bibliographystyle{aaai23}
\bibliography{a_star}  

\end{document}